\documentclass[letterpaper, 10 pt, conference]{ieeeconf}  %

\IEEEoverridecommandlockouts                              %

\usepackage{times}
\usepackage{cite}

\usepackage{caption}
\usepackage{adjustbox}
\usepackage{placeins}

\usepackage{amsmath}
\usepackage{amssymb}
\usepackage{mathtools}
\usepackage{siunitx}
\usepackage{stackengine}
\usepackage{multicol}
\usepackage[bookmarks=true]{hyperref}

\usepackage{graphicx}

\usepackage{aliascnt}
\usepackage[capitalise]{cleveref}
\usepackage{url}

\usepackage{algorithm}
\usepackage{algpseudocode}
\usepackage{booktabs}
\usepackage{array}
\usepackage{aligned-overset}
\usepackage{balance} %

\usepackage{subcaption}
\usepackage{rotating}
\usepackage{multirow}
\usepackage{diagbox}

\newcommand{\db}{\ensuremath{\boldsymbol{d}}}

\newcommand{\xb}{\ensuremath{\boldsymbol{x}}}
\newcommand{\yb}{\ensuremath{\boldsymbol{y}}}
\newcommand{\zb}{\ensuremath{\boldsymbol{z}}}

\newcommand{\pb}{\ensuremath{\boldsymbol{p}}}

\newcommand{\Bs}{\ensuremath{\mathcal{B}}}

\newaliascnt{proposition}{theorem}
\newtheorem{proposition}[proposition]{Proposition}

\aliascntresetthe{proposition}

\crefname{section}{Sec.}{Sections}
\Crefname{section}{Sec.}{Sections}
\crefname{ALG@line}{line}{lines}
\Crefname{ALG@line}{Line}{Lines}

\makeatletter
\newcommand\footnoteref[1]{\protected@xdef\@thefnmark{\ref{#1}}\@footnotemark}
\newcommand*{\rom}[1]{\text{\expandafter\@slowromancap\romannumeral #1@}}
\makeatother

\usepackage{ifthen}
\newboolean{anon}
\newboolean{arxiv}
\setboolean{arxiv}{false}

\ifthenelse{\boolean{arxiv}}{
	\setboolean{anon}{false}
}{
	\setboolean{anon}{false}
}

\ifthenelse{\boolean{anon}}{
}{
}

\title{\LARGE \bf
Vision-Based Safe Human-Robot Collaboration with Uncertainty Guarantees
}

\ifthenelse{\boolean{anon}}{
\author{Author Names Omitted for Anonymous Review. Paper-ID [add your ID here]}
}{
\author{Jakob Thumm$^{1}$, Marian Frei$^{2}$, Tianle Ni$^{3}$, Matthias Althoff$^{4}$, and Marco Pavone$^{1}$%
\thanks{$^{1}$Jakob Thumm and Marco Pavone are with the Department of Aeronautics and Astronautics,
        Stanford University. {\tt\small thumm@stanford.edu, pavone@stanford.edu}}%
\thanks{$^{2}$Marian Frei is with the Chair of Imaging and Computer Vision,
        RWTH Aachen University. {\tt\small marian.frei@lfb.rwth-aachen.de}}%
\thanks{$^{3}$Tianle Ni is with the School of Artificial Intelligence,
        Shanghai Jiao Tong University. {\tt\small tianle.ni@tum.de}}%
\thanks{$^{4}$Matthias Althoff is with the Department of Computer Engineering,
        Technical University of Munich. {\tt\small althoff@tum.de}}%
}
}

\begin{document}

\maketitle
\thispagestyle{empty}
\pagestyle{empty}

\begin{abstract}
We propose a framework for vision-based human pose estimation and motion prediction that gives conformal prediction guarantees for certifiably safe human-robot collaboration.
Our framework combines aleatoric uncertainty estimation with OOD detection for high probabilistic confidence. 
To integrate our pipeline in certifiable safety frameworks, we propose conformal prediction sets for human motion predictions with high, valid confidence.
We evaluate our pipeline on recorded human motion data and a real-world human-robot collaboration setting.
\end{abstract}

\section{Introduction}\label{sec:introduction}
Autonomous robots will become an integral part of our society, performing tedious and dangerous tasks in industry, households, and healthcare.
To ensure safety in these human-centered environments, it is crucial to accurately perceive human poses, predict their motion, and control the robot to prevent critical collisions with the human.
Hereby, we have to provide certifiable safety guarantees in all possible unseen situations.

Current safe human-robot collaboration (HRC) approaches can provably guarantee human safety if an accurate human pose measurement is available~\cite{haddadin_2012_MakingRobots, beckert_2017_OnlineVerification, thumm_2022_ProvablySafe, thumm_2026_GeneralSafety}.
These approaches typically rely on a marker-based motion-tracking system for accurate pose estimation, which drastically limits their deployment potential.
Previous works that estimate the human pose from more mobile sensor modalities, e.g., \mbox{RGB-D} cameras, usually assume a fixed maximal estimation error~\cite{zanchettin_2016_SafetyHumanrobot, rosenstrauch_2018_HumanRobot, kumar_2019_SpeedSeparation, lacevic_2023_EnhancedPerformance, lacevic_2023_SafeHumanRobot} or provide uncertainty estimates without conformal guarantees~\cite{gundavarapu_2019_StructuredAleatoric, li_2021_HumanPose, bramlage_2023_PlausibleUncertainties, schaefer_2023_GloProGloballyConsistent,  xu_2024_ViTPoseVision, ying_2024_MultiviewActive, maeda_2024_MultimodalActive, davoodnia_2024_Upose3DUncertaintyaware,  khirodkar_2025_SapiensFoundation}.
Most of these approaches can catastrophically fail if inputs are out of distribution (OOD), which is not captured in the predicted aleatoric uncertainty.

For predicting all possible states that the human can occupy in a given time interval, traditional safe HRC approaches~\cite{zanchettin_2016_SafetyHumanrobot, rosenstrauch_2018_HumanRobot, kumar_2019_SpeedSeparation, lacevic_2023_EnhancedPerformance, lacevic_2023_SafeHumanRobot, beckert_2017_OnlineVerification, thumm_2022_ProvablySafe, thumm_2026_GeneralSafety} use simple motion models, e.g., that the human can move with up to $v_{\text{max}} = $~\SI{1.6}{\meter \per \second} in any direction as defined in ISO 13855:2010~\cite{iso_2010_SafetyMachinery}.
These models tend to be highly conservative as they are not data-driven.
State-of-the-art human motion prediction models~\cite{mao_2019_LearningTrajectory, mao_2020_HistoryRepeats, ma_2022_ProgressivelyGenerating, zhang_2024_SkeletonRGBIntegrated, saadatnejad_2024_ReliableHuman, eltouny_2024_DETGNUncertaintyAware} are less conservative but often lack (i) heteroscedastic aleatoric uncertainty estimates, (ii) end-to-end propagation of the pose estimation uncertainty, and (iii) conformal prediction guarantees.

To alleviate these shortcomings, we propose a framework for vision-based human pose estimation and motion prediction in~\cref{fig:overview}.
First, we estimate the two-dimensional (2D) human pose and its covariances in the two camera frames.
Then, we perform an uncertainty-aware triangulation to retrieve the 3D pose with covariances.
Given a history of human poses, we predict future 3D poses and their covariances.
Based on the predicted covariances, we propose conformal prediction sets to over-approximate the uncertainty in the motion prediction with an $1-\epsilon$ confidence bound.
To detect critical OOD inputs in the pose estimation and motion prediction, we use the gradient-based OOD detection method in~\cite{miani_2024_SketchedLanczos}.
By reusing past motion predictions as potential replacements for OOD inputs, we establish a smooth operation of our pipeline in unseen situations.
The conformal prediction sets returned by our pipeline directly feed into the provably safe HRC approach SARA shield~\cite{thumm_2026_GeneralSafety}, which establishes certifiable safety guarantees at all times.

The main contribution of our work is a framework for safe HRC from vision-based inputs featuring
\begin{itemize}
    \item 3D human motion prediction with end-to-end uncertainty propagation,
    \item conformal prediction sets for human poses, and
    \item a method to handle OOD inputs in a continuous prediction setting. 
\end{itemize}
We evaluate our pipeline on the Human3.6M dataset~\cite{ionescu_2013_Human36m} and in a real-world HRC setting with SARA shield~\cite{thumm_2026_GeneralSafety}. 
In our experiments, we find that (i) our human pose estimation and motion prediction performs similarly to state-of-the-art models, (ii) the conformal prediction sets reduce the conservatism of model-based predictions by a factor of \num{11}, and (iii) our proposed OOD handling reduces interruptions in real-world operation by \SI{36.0}{\percent}.

\begin{figure*}[t]
    \centering
    \includegraphics[width=\linewidth]{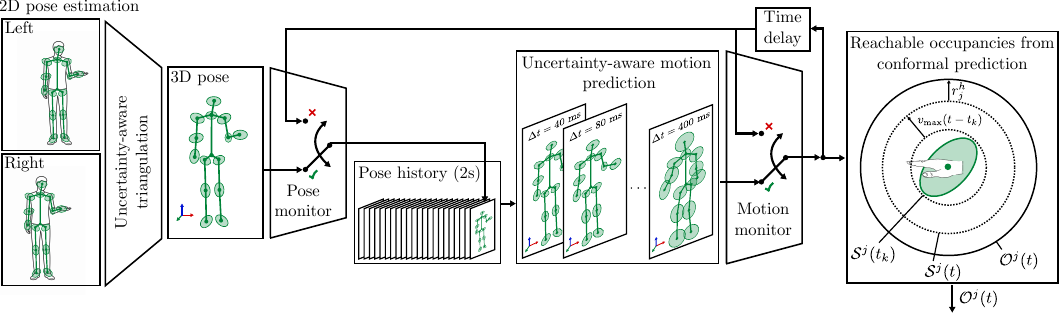}
    \caption{Methodological overview of our pose estimation and motion prediction pipeline with conformal prediction sets. %
	}
    \label{fig:overview}
\end{figure*}
\section{Preliminaries}\label{sec:preliminaries}

For brevity, we use the colon notation $A:B$ to refer to the sequence $A, A+1, \dots, B$.
We denote a point in the two-dimensional (2D) image space as $(u, v) \in \mathbb{R}^2$ and a point in the three-dimensional (3D) Euclidean space as $\pb \in \mathbb{R}^3$.
The human pose at timestep $k$ is denoted by a tuple of 3D points $\mathcal{P}_k = (\pb^1_k, \pb^2_k, \dots, \pb^J_k) \in \mathbb{R}^{J \times 3}$, and the motion of a human at the discrete timesteps $0, 1, \dots K$ is denoted by a sequence of poses $\mathcal{P}_{1:K} = (\mathcal{P}_1, \mathcal{P}_2, \dots, \mathcal{P}_K) \in \mathbb{R}^{K \times J \times 3}$.
Finally, we refer to a sphere with center $\pb_c$ and radius $r$ as $\Bs(\pb_c, r)$.

\subsection{Conformal Predictions}\label{sec:prelim_conformal_predictions}
Let $\mathcal{Z} = {(\xb_i, \yb_i)}_{i=1:N}$ be a set of i.i.d. training tuples with inputs $\xb_i \in \mathcal{X}$ and labels $\yb_i \in \mathcal{Y}$, which is devided into a training set $\mathcal{Z}^{\text{train}}$ and a calibration set $\mathcal{Z}^{\text{cal}}$ with $\mathcal{Z}^{\text{train}} \cup \mathcal{Z}^{\text{cal}} = \mathcal{Z}$.
Further, let a prediction model $f_\theta : \mathcal{X} \rightarrow \mathcal{Y}$ be trained on $\mathcal{Z}^{\text{train}}$. 
We denote a bounded set $\mathcal{S}_i \subseteq \mathcal{Y}$ as a conformal prediction set if $P(\yb_i \in \mathcal{S}_i) \ge 1-\epsilon$ with confidence level $1-\epsilon$~\cite[Sec. 2.3]{messoudi_2022_EllipsoidalConformal}.
A non-conformity measure $A(\mathcal{Z}^{\text{train}}, z_i) \in \mathbb{R}$ is a function that scores how dissimilar a calibration sample $z_i$ is from the training dataset~\cite{vovk_2022_AlgorithmicLearning}.

\subsection{Problem Statement}\label{sec:problem_statement}
Given a history of $N_I$ stereo image pairs $\{(\mathbf{I}_{k, 1}, \mathbf{I}_{k, 2})\}_{k=1:N_I}$ with $f_{\text{cam}} = \tfrac{1}{t_{k+1}-t_{k}}$ frames per second, our goal is to predict a sequence of $N_P$ tuples of $J$ conformal prediction sets $((\mathcal{S}^1_1, \mathcal{S}^2_1, \dots \mathcal{S}^J_1), \dots, (\mathcal{S}^1_{N_P}, \mathcal{S}^2_{N_P}, \dots \mathcal{S}^J_{N_P}))$ that include the future human joint positions with $1-\epsilon$ confidence.

\section{Human Pose Pipeline}\label{sec:methodology}

To solve the problem in~\cref{sec:problem_statement}, we propose the human pose pipeline depicted in~\cref{fig:overview}.

\subsection{Uncertainty-Aware 3D Pose Estimation}\label{sec:method_human_pose}
For our 3D pose estimation, we first adapt YOLO26~\cite{sapkota_2026_YOLO26Key} to return the covariance matrix $\boldsymbol{C}^j_{i,\text{2D}} \in \mathbb{R}^{2 \times 2}$ in addition to the 2D mean $(u_i^j, v_i^j)$ of each human joint $j = 1, \dots, J$ in the two calibrated camera images $\mathbf{I}_{1}$ and $\mathbf{I}_{2}$.
We extract the approximated covariance matrix from the scale of the Gaussian distribution returned by the regression model of the flow-based regression~\cite{li_2021_HumanPose} used in YOLO26.

We then estimate the 3D joint positions from the two 2D poses by standard linear triangulation~\cite{hartley_2003_MultipleView}. 
Following~\cite{bryan_2012_ClusteringSynchronizing}, we estimate the cross-covariance from residual correlations over the calibration set $\mathcal{Z}^{\text{cal}}$.
We then obtain the 3D covariances through first-order covariance propagation~\cite{lewandowski_2010_CovariancebasedVectornetworkanalyzer} and add a small constant isotropic term to the covariances to account for systematic reconstruction errors, e.g., camera calibration imperfections.

\subsection{Uncertainty-Aware 3D Motion Prediction}\label{sec:method_human_motion}
We refer to the tuple of covariance matrices of a human pose at timestep $k$ as $\mathcal{C}_k = (\boldsymbol{C}^1_k, \boldsymbol{C}^2_k, \dots, \boldsymbol{C}^J_k) \in \mathbb{R}^{J \times 3 \times 3}$.
Given a sequence of past poses $\mathcal{P}_{1:K_I}$ and their covariances $\mathcal{C}_{1:K_I} = (\mathcal{C}_{1}, \mathcal{C}_{2}, \dots, \mathcal{C}_{K_I})$, we predict a sequence of future human 3D poses $\mathcal{P}_{K_I + 1 : K_I + K_P}$ and their heteroscedastic aleatoric covariance matrices $\mathcal{C}_{K_I + 1 : K_I + K_P}$. 

\subsubsection{Model Architecture}
For our motion prediction model, we extend the discrete cosine transform (DCT) transformer model of~\cite{mao_2020_HistoryRepeats} with uncertainty inputs and outputs.
We first apply a DCT to the pose and covariance inputs $\mathcal{P}_{1:K_I}$ and $\mathcal{C}_{1:K_I}$ along the temporal axis to obtain a frequency representation whose coefficients are naturally ordered from low to high frequencies~\cite{mao_2019_LearningTrajectory, mao_2020_HistoryRepeats}. 
We embed these coefficients into a higher-dimensional feature space and add learnable positional encodings to retain temporal information. 
The embedded uncertainty is scaled by a learnable factor and added to the pose features before the transformer blocks. 
A key architectural choice in our model is to process low- and high-frequency components separately, using dedicated multi-layer perceptron paths after the attention module. 
Low frequencies capture smooth global trends, whereas high frequencies capture rapid, fine-scale motion variations. 
After separate processing, both streams are recombined and mapped back to pose space using the inverse DCT.
To predict $\mathcal{C}_{K_I + 1 : K_I + K_P}$, we follow the Cholesky factorization approach in \cite{russell_2021_MultivariateUncertainty}, which yields stable training and avoids invalid covariance matrices.

\subsubsection{Training Objective}
We train the uncertainty head using a multivariate Gaussian negative log-likelihood (NLL) loss~\cite{russell_2021_MultivariateUncertainty}
\begin{equation}
\mathcal{L}_{\text{NLL}} =
\frac{1}{J K_P} \sum_{k=K_I + 1}^{K_I + K_P} \sum_{j=1}^{J}
\frac{1}{2}\log|\boldsymbol{C}^j_k|
+
\frac{1}{2}{\db^j_k}^\top {\boldsymbol{C}^j_k}^{-1}\db^j_k,
\end{equation}
with residual $\db^j_k = \pb^j_k-\hat{\pb}^j_k$.
To stabilize the mean prediction quality, we add the $\ell_1$ pose loss
\begin{equation}
\mathcal{L}_{\text{pose}} = \frac{1}{J K_P} \sum_{k=K_I + 1}^{K_I + K_P} \sum_{j=1}^{J} \|\db^j_k\|_1
\end{equation}
to the total loss $\mathcal{L}_{\text{total}}=\mathcal{L}_{\text{NLL}}+\lambda\,\mathcal{L}_{\text{pose}}$.

\subsubsection{Stable Multi-Phase Optimization}
Direct end-to-end training with covariance prediction can be unstable. 
Therefore, we follow a staged procedure: (i) train a deterministic base predictor using $\mathcal{L}_{\text{pose}}$, (ii) add the uncertainty head and train it first with the backbone frozen (decoupled learning), and (iii) fine-tune the full model end-to-end with a gradually reduced $\lambda$ to balance accuracy and calibration.

\subsection{Conformal Prediction Sets}\label{sec:conformal_prediction}
To construct the conformal prediction sets from our uncertainty estimates, we adapt the procedure in~\cite[Sec. 2.3]{messoudi_2022_EllipsoidalConformal}.
Let a training tuple for the motion model be ${\zb_i = ((\mathcal{P}_{1:K_I}, \mathcal{C}_{1:K_I})_i, (\mathcal{P}_{K_I + 1 : K_I + K_P}, \mathcal{C}_{K_I + 1 : K_I + K_P})_i)}$. 
We define the non-conformity measure for timestep $k$ and joint $j$ as
\begin{align}
	A_k^j(\zb_i) &= \frac{\|\db_{k, i}^j\|_2}{\sqrt{\lambda_{\text{max}}(\boldsymbol{C}_{k, i}^j)}} \, ,
\end{align}
where $\lambda_{\text{max}}(\boldsymbol{C}_{k, i}^j)$ is the maximal eigenvalue of $\boldsymbol{C}_{k, i}^j$.
Let $\alpha_k^j$ be the $1-\epsilon$ percentile of the scores $\alpha_{k, i}^j = A_k^j(\zb_i)$ for all $\zb_i \in \mathcal{Z}^{\text{cal}}$ such that $P(\alpha_{k, i}^j \le \alpha_k^j) \ge 1-\epsilon$~\cite[Sec. 2.3, Step 3 and 4]{messoudi_2022_EllipsoidalConformal}.
\begin{proposition}\label{prop:sphere_set}
	Given a predicted position $\hat{\pb}_{k}^j$ and covariance $\boldsymbol{C}_{k}^j$, the sphere ${\mathcal{S}^j(t_k) = \mathcal{B}\left(\hat{\pb}_{k}^j, \alpha_k^j \sqrt{\lambda_{\text{max}}(\boldsymbol{C}_{k}^j)}\right)}$ 
	is a conformal prediction set for joint $j$ at time $t_k$.
\end{proposition}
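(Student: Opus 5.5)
The argument is a direct unfolding of the definitions: I will show that the event ``the true joint position lies in the sphere'' is exactly the event ``the non-conformity score is at most the calibrated threshold'', and then use the calibration guarantee stated just before \cref{prop:sphere_set}.

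\textbf{Step 1: membership equals the score condition.} Consider a new test tuple $\zb$ with true position $\pb_k^j$, predicted mean $\hat{\pb}_k^j$ and predicted covariance $\boldsymbol{C}_k^j$, and write $\db_k^j = \pb_k^j - \hat{\pb}_k^j$. By the definition of the sphere $\Bs(\cdot,\cdot)$,
\begin{equation*}
\pb_k^j \in \mathcal{S}^j(t_k) \iff \|\db_k^j\|_2 \le \alpha_k^j \sqrt{\lambda_{\text{max}}(\boldsymbol{C}_k^j)} .
\end{equation*}
Because the covariance is produced via the Cholesky parametrization of \cite{russell_2021_MultivariateUncertainty}, $\boldsymbol{C}_k^j$ is positive definite, so $\lambda_{\text{max}}(\boldsymbol{C}_k^j) > 0$. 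We may therefore divide both sides by $\sqrt{\lambda_{\text{max}}(\boldsymbol{C}_k^j)}$, which gives the equivalent condition $A_k^j(\zb) \le \alpha_k^j$.

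\textbf{Step 2: the coverage guarantee.} From Step 1, $P(\pb_k^j \in \mathcal{S}^j(t_k)) = P(A_k^j(\zb) \le \alpha_k^j)$. This probability is at least $1-\epsilon$ by the choice of $\alpha_k^j$ following \cite[Sec. 2.3, Steps 3 and 4]{messoudi_2022_EllipsoidalConformal}. The sphere is also bounded, since its radius is finite. Together these two facts satisfy the definition of a conformal prediction set in \cref{sec:prelim_conformal_predictions}.

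\textbf{Main obstacle.} The delicate part is justifying that the calibration statement transfers to the test score. Strictly, this needs two things. First, the test tuple must be exchangeable with the calibration tuples in $\mathcal{Z}^{\text{cal}}$, which holds under the i.i.d.\ assumption. Second, $A_k^j$ must depend only on $f_\theta$, which is trained on the disjoint set $\mathcal{Z}^{\text{train}}$, so the calibration scores and the test score are exchangeable.

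To make the guarantee hold exactly rather than approximately, I would pick $\alpha_k^j$ as the $\lceil (n+1)(1-\epsilon)\rceil$-th smallest of the $n = |\mathcal{Z}^{\text{cal}}|$ calibration scores. The standard rank argument then applies: the rank of the test score among the $n+1$ exchangeable scores is uniform, so it lies at or below that position with probability at least $1-\epsilon$. I would cite this argument from \cite{messoudi_2022_EllipsoidalConformal} rather than reprove it.
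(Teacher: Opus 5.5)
Your proposal is correct and takes the same route as the paper, whose proof just cites \cite[Theo.~1]{messoudi_2022_EllipsoidalConformal}; you spell out the two ingredients that theorem packages, namely that the ball $\mathcal{S}^j(t_k)$ is exactly the sublevel set $\{A_k^j \le \alpha_k^j\}$ (using $\lambda_{\text{max}}(\boldsymbol{C}_k^j) > 0$), and the split-conformal rank argument under exchangeability. Your point that $\alpha_k^j$ should be the $\lceil (n+1)(1-\epsilon)\rceil$-th smallest of the $n = |\mathcal{Z}^{\text{cal}}|$ calibration scores usefully sharpens the paper's loosely stated percentile; note that this choice also needs $n \ge 1/\epsilon - 1$ for the radius to be finite, and hence for the set to be bounded.
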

\begin{proof}
	The proof follows~\cite[Theo. 1]{messoudi_2022_EllipsoidalConformal}.
\end{proof}

Our sets only provide conformal predictions at predefined timesteps.
To retrieve the conformal predictions at any time $t \ge t_{K_I}$, we find the maximum time $t_k$ for all $k=K_I+1:K_I+K_P$ for which $t_k \le t$, set $\Delta t= t - t_k$, and extend the radius of $\mathcal{S}^j(t_k)$ by $\Delta t v_{\text{max}}$ with $v_{\text{max}} = \SI{1.6}{\meter \per \second}$ as defined in~\cite{iso_2010_SafetyMachinery}.
To include the entire human body in the sets, we can use the SARA tool~\cite{schepp_2022_SaRATool} to determine the full-body reachable occupancies $\mathcal{O}^j(t)$ from the conformal prediction sets at a given time point.

\subsection{OOD Detection}\label{sec:ood_handling}
Our conformal prediction in \cref{sec:conformal_prediction} assumes that samples are drawn from the training distribution, which may not hold when deploying robots in new environments.
Therefore, we deploy the sketching Lanczos uncertainty (SLU) OOD detection method of~\cite{miani_2024_SketchedLanczos} to detect OOD inputs in the pose estimation $\mathrm{SLU}_{\text{2D}}$ and motion prediction $\mathrm{SLU}_{\text{mot}}$.
We calibrate the two OOD thresholds $\tau_{\text{2D}}$ and $\tau_{\text{mot}}$ on the calibration datasets, so that $P(\text{SLU}(\zb_i) \le \tau) \ge 1-\epsilon_{\text{OOD}}$.
Since the SLU computation time scales linearly with the number of output parameters, we use reduced models that only predict the positions of the human head and hand for the OOD detection. 
For the motion prediction network, we further reduce the model output to only predict the timestep $K_I + K_P / 2$.
Lastly, we treat a missing human in the frame as OOD and leave the handling of humans entering and leaving the workspace to future work.

\subsection{Handling OOD Events}
\begin{algorithm}[htb]
	\caption{Uncertainty-Aware Human Pose Pipeline}\label{alg:pipeline}
	\begin{algorithmic}[1]
		\Require Stereo camera stream $\{(\mathbf{I}_{k, 1}, \mathbf{I}_{k, 2})\}_{k \geq 1}$, $f_{\text{2D}}$, $f_{\text{mot}}$, $\mathrm{SLU}_{\text{2D}}$, $\mathrm{SLU}_{\text{mot}}$, $\tau_{\text{2D}}$, $\tau_{\text{mot}}$, $K_I$, $K_P$, $N_{\text{req}}$
		\State Initialize pose buffer $\mathcal{H} \leftarrow \varnothing$, pose validity buffer $\mathbf{v} \leftarrow \mathbf{0}_{K_I}$, and motion buffer $\mathcal{M}$ to NaN \label{alg:line:init}
		\While{True}
		\State Acquire stereo frames $(\mathbf{I}_{k,1}, \mathbf{I}_{k,2})$
		\State Estimate 2D pose and covariances \Comment{Sec.~\ref{sec:method_human_pose}}
		\State Compute 3D pose $\mathcal{P}_k$ and covariances $\mathcal{C}_k$ via stereo triangulation \Comment{Sec.~\ref{sec:method_human_pose}}
		\State Compute $s_{\text{2D}} \leftarrow \mathrm{SLU}_{\text{2D}}(\mathbf{I}_{k,1})$ \label{alg:line:slu2d}
		\If{$s_{\text{2D}} \le \tau_{\text{2D}}$} \label{alg:line:pose:if}
		\State Append $(\mathcal{P}_k,\, \mathcal{C}_k)$ to $\mathcal{H}$; set $v_k \leftarrow 1$ \label{alg:line:pose:valid}
		\Else
		\State Append $\mathcal{M}[0]$ to $\mathcal{H}$; set $v_k \leftarrow 0$ \label{alg:line:pose:ood}
		\EndIf
		\If{$|\mathcal{H}| < K_I$} \textbf{continue} \EndIf \Comment{Wait until pose buffer is full} \label{alg:line:bufguard}
		\State Predict $(\mathcal{P}_{K_I+1:K_I+K_P},\, \mathcal{C}_{K_I+1:K_I+K_P}) \leftarrow f_{\text{mot}}(\mathcal{H})$ \Comment{Sec.~\ref{sec:method_human_motion}} \label{alg:line:motpredict}
		\State Compute $s_{\text{mot}} \leftarrow \mathrm{SLU}_{\text{mot}}(\mathcal{H})$ \label{alg:line:slumot}
		\If{$s_{\text{mot}} \le \tau_{\text{mot}}$ \textbf{and} $\sum_{i=K_I - N_{\text{req}}+1}^{K_I} v_i = N_{\text{req}}$} \label{alg:line:mot:if}
		\State $\mathcal{M} \leftarrow (\mathcal{P}_{K_I+1:K_I+K_P},\, \mathcal{C}_{K_I+1:K_I+K_P})$ \Comment{If input is ID and last $N_{\text{req}}$ poses came from images, accept new prediction} \label{alg:line:mot:accept}
		\Else
		\State $\mathcal{M}[0:K_P-1] \leftarrow \mathcal{M}[1:K_P]$; $\mathcal{M}[K_P] \leftarrow \text{NaN}$ \Comment{Continue on prior prediction} \label{alg:line:mot:shift}
		\EndIf
		\State $\mathcal{O}_{1:K_P} \leftarrow \mathrm{ConformalSets}(\mathcal{M})$ \Comment{Sec.~\ref{sec:conformal_prediction}} \label{alg:line:conformal}
		\State Publish reachable occupancies $\mathcal{O}_{1:K_P}$ \label{alg:line:publish}
		\EndWhile
	\end{algorithmic}
\end{algorithm}
As our motion prediction requires $K_I$ valid input poses, our pipeline would not return any prediction in the timeframe $K_I f_{\text{cam}}$ after an invalid 2D pose.
With, e.g., $K_I=50$ and $\epsilon_{\text{OOD}} = \SI{95}{\percent}$, this would induce a failed motion prediction in \SI{92.3}{\percent} of timesteps.
To circumvent this, we propose an algorithm that reuses previous motion predictions for OOD pose estimation.

We describe the total process of our human pose pipeline in~\cref{alg:pipeline}.
At each timestep, $\mathrm{SLU}_{\text{2D}}$ scores the current camera image against the training distribution of $f_{\text{2D}}$ (line~\ref{alg:line:slu2d}).
If the 2D OOD score of the first image is below the calibrated threshold $\tau_{\text{2D}}$, the estimated pose $\mathcal{P}_k$ and covariance $\mathcal{C}_k$ are appended to the pose buffer $\mathcal{H}$ and the corresponding validity flag is set to $v_k = 1$ (line~\ref{alg:line:pose:valid}).
If the pose is classified as OOD, we discard the estimated 3D pose and replace it by the first entry of the current motion prediction buffer $\mathcal{M}[0]$ and set the validity flag to $v_k = 0$ (line~\ref{alg:line:pose:ood}).
This ensures the pose buffer contains a temporally consistent sequence of plausible poses during periods where the pose estimator is unreliable.

Once the pose buffer contains $K_I$ entries (line~\ref{alg:line:bufguard}), we start predicting human motions using $f_{\text{mot}}$ (line~\ref{alg:line:motpredict}) and the motion OOD score with $\mathrm{SLU}_{\text{mot}}$ (line~\ref{alg:line:slumot}).
The motion buffer $\mathcal{M}$ is updated with the new prediction if (i) the motion input is in-distribution ($s_{\text{mot}} \le \tau_{\text{mot}}$), and (ii) the last $N_{\text{req}}$ entries of the pose buffer all originate from valid image observations (lines~\ref{alg:line:mot:if}--\ref{alg:line:mot:accept}).
The second condition prevents a continuous feedback loop in the motion prediction if the 2D pose input is OOD.
If either condition fails, the motion buffer is shifted by one timestep to the left and the vacated slot at the end of the buffer is marked as invalid (line~\ref{alg:line:mot:shift}).
This design ensures the pipeline degrades gracefully under OOD conditions and resumes normal operation only after $N_{\text{req}}$ consecutive valid image-based poses have been observed.

\section{Experiments}\label{sec:experiments}
In this section, we evaluate the performance of our pose pipeline on real-world data and real-world robot deployment.

\subsection{Prediction Accuracy}\label{sec:exp_motion_prediction_acc}
We evaluate our motion prediction accuracy on the Human3.6M (H36M) benchmark~\cite{ionescu_2013_Human36m}, following the standard protocol used in prior work~\cite{mao_2019_LearningTrajectory, guo_2023_BackMLP}, where we use subjects S1, S6, S7, S8, and S9 for training, S11 for validation, and S5 for testing.
We report all results for $K_I = \num{50}$ frames, $K_P = \num{10}$ frames, $f_{\text{cam}} = \SI{25}{fps}$, $\epsilon_{\text{OOD}} = \SI{95}{\percent}$, and $J=\num{13}$ joints on the test set. 
As the primary metric, we use the mean per joint position error (MPJPE)~\cite{mao_2019_LearningTrajectory}.

We compare against the common baselines HisRep~\cite{mao_2020_HistoryRepeats}, ST-DGCN~\cite{ma_2022_ProgressivelyGenerating}, ST-Trans~\cite{saadatnejad_2024_ReliableHuman}, and SiMLPe~\cite{guo_2023_BackMLP} in~\cref{tab:prediction-benchmark}.
For a fair comparison, we evaluate the model performance on ground-truth 3D pose inputs.
After training on ground-truth poses (stage 1), our model outperforms state-of-the-art models.
However, after training on the estimated 3D poses and adding uncertainty prediction (final), our model performs slightly worse on the original task.

\begin{table}[h]
	\centering
	\caption{Motion prediction test results on H36M.}
	\label{tab:prediction-benchmark}
	\begin{tabular}{lcccc}
		\toprule
		& \multicolumn{4}{c}{$\downarrow$ \textbf{MPJPE (mm)}} \\
		\cmidrule(lr){2-5}
		\textbf{Method} & \textbf{80\,ms} & \textbf{160\,ms} & \textbf{320\,ms} & \textbf{400\,ms} \\
		\midrule
		Repeating Last-Frame~\cite{guo_2023_BackMLP} & 23.8 & 44.4 & 76.1 & 88.2 \\
		One FC~\cite{guo_2023_BackMLP} & 14.0 & 33.2 & 68.0 & 81.5 \\
		HisRep~\cite{mao_2020_HistoryRepeats} & 10.4 & 22.6 & 47.1 & 58.3 \\
		ST-DGCN~\cite{ma_2022_ProgressivelyGenerating} & 10.3 & 22.7 & 47.4 & 58.5 \\
		ST-Trans~\cite{saadatnejad_2024_ReliableHuman} & 10.4 & 23.4 & 48.4 & 59.2 \\
		SiMLPe~\cite{guo_2023_BackMLP} & 9.6 & 21.7 & 46.3 & 57.3 \\
		\midrule
		Ours (stage 1) & \textbf{8.7} & \textbf{16.7} & \textbf{41.1} & \textbf{54.5} \\
		Ours (final) & 18.4 & 28.1 & 53.1 & 67.2 \\
		\bottomrule
	\end{tabular}
\end{table}

\subsection{Conformal Prediction Set Evaluation}\label{sec:exp_conformal_prediction_set}

We compare the validity of our conformal prediction sets against the constant velocity model of ISO 13855:2010~\cite{iso_2010_SafetyMachinery}, which assumes a maximal velocity of $v_{\text{max}} = \SI{1.6}{\meter \per \second}$ for all human joints in any direction.
We calibrate the conformal prediction sets on the H36M validation data using a confidence level of $1-\epsilon = \SI{99}{\percent}$.
We report the percentage of ground-truth joint positions within the predicted sets and the average set volume in~\cref{tab:conformal_prediction_set}.
In these experiments, our conformal prediction sets achieve a higher coverage (\SI{98.25}{\%}) than ISO 13855:2010 (\SI{97.93}{\%}) while reducing the mean set volume by a factor of \num{11} compared to ISO 13855:2010.
The coverage is slightly lower than our calibration confidence, which indicates that the test data includes faster movements than the calibration dataset.
Note that the assumption of $v_{\text{max}} = \SI{1.6}{\meter \per \second}$ defined in ISO~13855:2010 did not hold in our experiments.

\begin{table}[h]
	\centering
	\caption{Conformal prediction set test results on H36M.}
	\label{tab:conformal_prediction_set}
	\begin{tabular}{lcc}
		\toprule
		\textbf{Method} & $\uparrow$ Coverage (\%) & $\downarrow$ Volume ($m^3$) \\
		\midrule
		ISO 13855:2010~\cite{iso_2010_SafetyMachinery} & 97.93 & 0.191 \\
		Conformal prediction sets (ours) & \textbf{98.25} & \textbf{0.017}\\
		\bottomrule
	\end{tabular}
\end{table}

\subsection{Full Pipeline Evaluation}\label{sec:full_pipeline_eval}
We evaluate the efficacy of our OOD handling mechanism described in~\cref{alg:pipeline} by executing our full pipeline on the H36M test data with varying $N_{\text{req}}$ values.
Here, $N_{\text{req}} = K_I = 50$ indicates that any OOD input in the 2D pose estimation would result in an invalid motion prediction, and $N_{\text{req}} = 3$ is the recommended value used in our real-world experiments.
Our results in~\cref{tab:full_pipeline_results} show that our OOD pipeline reduces the rate of invalid pose buffers $\sum_{i=K_I - N_{\text{req}}+1}^{K_I} v_i < N_{\text{req}}$ by \SI{36.0}{\percent} while only increasing the average MPJPE by \SI{2.6}{\percent}.
Therefore, our OOD handling significantly increases the rate of valid motion predictions while maintaining a high prediction accuracy.

\begin{table}[h]
	\centering
	\caption{Full pipeline evaluation results on H36M.}
	\label{tab:full_pipeline_results}
	\begin{tabular}{lccc}
		\toprule
		$N_{\text{req}}$ & $\downarrow$ $\mathcal{H}$ invalid [\%] & $\uparrow$ Motion valid [\%] & $\downarrow$ MPJPE [mm] \\
		\midrule
		3 (ours) & \textbf{9.45} & \textbf{85.48} & 53.56 \\
		10 & 11.72 & 82.10 & 53.13 \\
		50 & 14.75 & 75.29 & \textbf{52.22} \\
		\bottomrule
	\end{tabular}
\end{table}

\subsection{Real-World Deployment}
We integrated our human pose pipeline in SARA shield~\cite{thumm_2022_ProvablySafe, thumm_2026_GeneralSafety} and tested it in a real-world HRC setting on a Franka Emika robot. 
In our real-world deployment, we use the Intel RealSense 435i camera for human perception and retrieve the depth information directly from its output. 
Hereby, we mark all 3D poses whose depth differs by more than \SI{0.8}{\meter} from the median depth as OOD.
A video of the deployment is available at \url{https://youtu.be/oeN8RgwpzhE}.
In the speed and separation monitoring mode of SARA shield, the robot always came to a complete stop before the human operator could reach the robot.

\section{Conclusion} \label{sec:conclusion}

We presented a vision-based framework for safe HRC with end-to-end uncertainty propagation, conformal prediction sets, and graceful OOD handling.
Our conformal prediction sets significantly reduce the prediction volume over ISO~13855, while our OOD handling effectively reduces invalid motion predictions.
Future work includes fusion with diverse sensor modalities, reasoning over critical safety situations, and a 3D pose estimation from RGB-D inputs.

\bibliographystyle{IEEEtran}
\bibliography{library_cleaned}

\end{document}